\documentclass[twoside]{article}

\usepackage[preprint]{aistats2027}
\usepackage{amsmath,amssymb,amsthm,booktabs}
\usepackage{graphicx,tikz}
\usepackage{algorithm,algorithmic}
\usetikzlibrary{arrows.meta}
\definecolor{neuralgreen}{HTML}{008577}
\definecolor{neuralgreenfill}{HTML}{E7F5F1}
\definecolor{neuralpurple}{HTML}{8053A0}
\definecolor{neuralpurplefill}{HTML}{F3EDF8}
\usepackage[round]{natbib}
\defcitealias{timss2023grade4}{IEA, 2024}
\usepackage[hidelinks]{hyperref}
\newtheorem{proposition}{Proposition}

\newcommand{\E}{\mathbb E}

\newcommand{\Normal}{\mathcal N}
\newcommand{\KL}{\operatorname{KL}}
\newcommand{\TV}{\mathrm{TV}}

\begin{document}
\setlength{\textfloatsep}{20pt plus 2pt minus 4pt}
\setlength{\floatsep}{8pt plus 2pt minus 2pt}
\setlength{\intextsep}{10pt plus 2pt minus 2pt}
\runningtitle{Semi-Amortized Bayesian Inference}
\runningauthor{Stevenson, B{\"u}rkner, Heathcote, and Radev}
\twocolumn[
\aistatstitle{Semi-Amortized Bayesian Inference with Reusable Local Posteriors}
\aistatsauthor{Niek Stevenson \And Paul-Christian B{\"u}rkner \And Andrew Heathcote \And Stefan T.~Radev}
\aistatsaddress{University of Amsterdam \And TU Dortmund University \And University of Amsterdam \And Rensselaer Polytechnic\\Institute}
]

\begin{abstract}
Amortized Bayesian inference uses simulations to perform Bayesian estimation when likelihoods are expensive or analytically unavailable. However, learning the full joint posterior is challenging in high-dimensional models with complex dependencies. We introduce a semi-amortized method that divides parameters into smaller blocks, and uses conditional independence to identify the information needed to infer each block. We train neural estimators for blocks whose posterior updates are computationally expensive or frequently repeated. Each estimator learns from simulations of the relevant model components to approximate the block’s full conditional posterior, given the relevant observations and the values of the variables on which it depends. A Markov chain then alternates direct draws from these estimators with conventional updates of the remaining variables. Each neural update uses the current conditioning values and requires neither likelihood evaluation nor an acceptance–rejection step. Where the same conditional structure recurs, one estimator can be reused across blocks and datasets, with an output dimension determined by the block rather than the full model. We illustrate the approach in high-dimensional and computationally expensive hierarchical and spatiotemporal models.
\end{abstract}

\section{INTRODUCTION}
\begin{figure*}[t]
\centering
\includegraphics[width=0.99\textwidth]{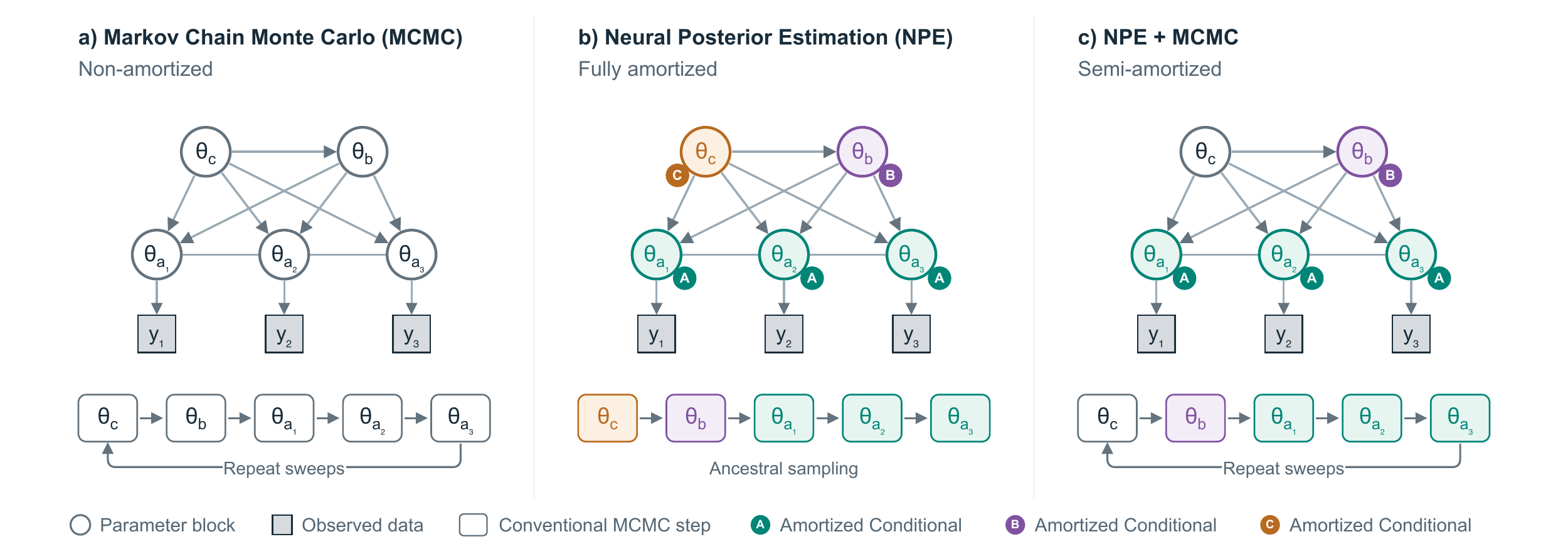}
\caption{Comparison of sampling methods for the same graphical model, where shared parameters $\theta_c$ and $\theta_b$ govern interconnected local parameters $\theta_{a_i}$, each generating observations $y_i$. \textbf{(a)} Conventional MCMC updates all parameter blocks in repeated sweeps. \textbf{(b)} Amortized neural posterior estimation (NPE) tries to approximate the full joint posterior directly through a suitable inverse factorization (see text) without resorting to MCMC. \textbf{(c)} Semi-amortized inference (ours) combines conventional MCMC updates of $\theta_c$ with draws from amortized conditional posteriors for the remaining $\theta_{-c}$. Estimator A is shared across the $\theta_{a_i}$ blocks, while estimator B updates $\theta_b$. During each sweep, a conditional update uses the current values of the other blocks.}
\label{fig:inference-overview}
\end{figure*}

Neural simulation-based inference (SBI) can learn intractable probability distributions from simulations when likelihoods are expensive or analytically unavailable~ \citep{cranmer2020frontier}. In amortized neural posterior estimation (NPE), a conditional density estimator is trained on simulated parameter--data pairs $(\theta, y)$ to approximate the posterior $p(\theta \mid y)$. Once trained, it can obtain samples for any new observation $y^*$ without retraining, amortizing the cost of repeated inference.

However, amortizing the full posterior becomes challenging as the number of inferred parameters grows and their dependencies become more complex. For example, \textit{hierarchical models} connect group-specific parameters through population distributions \citep{gelman2013bda}, \textit{state-space models} link latent states over time \citep{durbin2012statespace}, and \textit{spatial models} describe dependence across locations \citep{rue2005gmrf}. Such models can strain amortized methods, as the posterior dimension grows with the number of groups, time points, or locations, while dependencies between parameters must still be represented jointly.

The joint posterior of such models can often be decomposed into \textit{inverse factors} involving subsets of variables \citep{Stuhlmueller2013, Webb2018, habermann2026graphical}. This decomposition exposes smaller inference problems in which one block of variables is inferred conditional on the others. In a hierarchical model, for example, each group's parameters can be inferred from its observations conditional on the population parameters. Classical Gibbs sampling exploits this structure by iteratively drawing each parameter block from its full conditional distribution.

Amortized Bayesian inference on structured models also relies on clever factorization. Existing methods assign neural estimators to individual inverse factors so that samples from the joint posterior can be generated by ancestral sampling \citep{Webb2018, heinrich2024hierarchical, habermann2025multilevel, habermann2026graphical, arruda2024amortized}. While effective for moderately sized models, full amortization can become extremely costly when the posterior contains thousands or even hundreds of thousands of factors due to simulation and memory costs. Compositional modeling mitigates this problem by combining exchangeable factors at inference \citep{arruda2026compositional}. However, its accuracy can deteriorate as the number of composed factors grows.

Building on likelihood-free approximate Gibbs sampling \citep{rodrigues2020likelihood}, we instead develop a \emph{semi-amortized} Bayesian inference (SABI) method that combines neural conditional updates with conventional MCMC steps (\autoref{fig:inference-overview}).
Similar to \citet{arruda2026compositional}, our method does not simulate the full graphical model; it requires only identifying which inverse factors are difficult to sample and which admit efficient conventional updates—a distinction that is typically clear from the model structure. In hierarchical models, for example, the challenging step is often sampling group-specific posteriors, whereas the hyperposterior is well identified and easier to sample.

We demonstrate  SABI across three realistic applications with complex dependencies and tens of thousands of parameters, where comparisons with reference samplers show close agreement in posterior summaries. In a simulation-based application, SABI outperforms alternative SBI methods; further, it enables reuse of the same trained estimator across different hierarchical structures at scale. Additionally, we provide an initial analysis of the sampler's stationary accuracy.


\section{SEMI-AMORTIZED INFERENCE}
\label{sec:method}

Let $\theta=(\theta_1,\ldots,\theta_N)$ collect the parameter blocks selected for learned updates, $\vartheta = (\vartheta_1,\ldots,\vartheta_M)$ the remaining parameter blocks, $y$ the observations, and $x$ the fixed model inputs. The full conditional of each block $\theta_i$ depends on the other model variables only through its Markov blanket \citep{koller2009pgm}. We denote the observations in this blanket by $y_i$, collect the blanket parameter values in $c_i$, and write $x_i$ for the relevant fixed inputs. For example, in a hierarchical two-level model, $c_i$ contains the population parameters, while $x_i$ contains the group's design matrix. In a spatial model, $c_i$ may contain neighboring states and shared parameters. We write the full conditional as:
\begin{equation}
p(\theta_i\mid y,\theta_{-i},\vartheta,x)
=p(\theta_i\mid y_i,c_i,x_i),
\label{eq:block-conditional}
\end{equation}
where $\theta_{-i}$ denotes the remaining learned blocks.

\subsection{Training the Neural Updates}
We train a neural estimator to approximate this conditional using existing SBI infrastructure:
\begin{equation}
q_{\phi_i}(\theta_i\mid y_i,c_i,x_i)
\approx p(\theta_i\mid y_i,c_i,x_i),
\label{eq:block-context}
\end{equation}
where $\phi_i$ denotes the learnable neural network parameters.
The estimator must learn how a block's posterior changes with its conditioning parameters and observation design. We therefore generate training pairs of conditioning parameters and fixed inputs from a chosen training distribution $p(c_i,x_i)$. For each combination, we draw the block's parameters from the model's conditional prior and simulate its observations:
\begin{equation}
\begin{aligned}
(c_i^{(s)},x_i^{(s)})&\sim p(c_i,x_i),\\
\theta_i^{(s)}&\sim p(\theta \mid c_i^{(s)},x_i^{(s)}),\\
y_i^{(s)}&\sim p(y \mid \theta_i^{(s)},c_i^{(s)},x_i^{(s)}).
\end{aligned}
\label{eq:local-simulation}
\end{equation}
For simulated training set $s$. The conditional prior and observation model determine the posterior target, while $p(c_i,x_i)$ controls training coverage. Varying population means, scales, covariates, and observation counts supports reuse across groups and datasets. Preliminary inference or conditional diagnostics can guide refinement of $p(c_i,x_i)$ to improve approximation accuracy of $q_i(\theta_i\mid y_i,c_i,x_i)$ see Appendix~\ref{app:refine}.

Large or variable-sized inputs of $y_i$, $c_i$, and $x_i$ can be represented by a learned summary network $h_{\psi_i}(y_i,c_i,x_i)$ with network parameters $\psi_i$. We train the estimator $q_{\phi_i}$ and any learned summary $h_{\psi_i}$ by minimizing the expected negative log-likelihood over simulated training examples \citep{radevBayesFlow2020}:
\begin{equation}
\label{eq:estimator}\\
\mathcal L(\phi_i,\psi_i) = \E[-\log q_{\phi_i}(\theta_i\mid h_{\psi_i}(y_i,c_i,x_i))].
\end{equation}

Training can use transformed parameter coordinates, with the corresponding density transformation and equivalent objective
described in Appendix~\ref{app:coordinates}. We use normalizing flows as implemented in \texttt{BayesFlow}  \citep{kuhmichel2026bayesflow}. We chose normalizing flows due to their fast single-step sampling, but any modern free-form architecture is compatible with our method \citep[e.g., diffusion or flow matching][]{arruda2025diffusion}.


\subsection{Semi-Amortized Inference}
After training, inference proceeds through an MCMC sampler on all model parameters $(\theta, \vartheta)$. Starting from initial values, the sampler repeatedly updates each block conditional on the current values of the others. A complete pass through the learned blocks $\theta_1,\ldots,\theta_N$ and the conventional blocks $\vartheta_1,\ldots,\vartheta_M$ is one \emph{sweep}. We write $q_i$ for the trained estimator assigned to block $i$; multiple blocks may share the same estimator. The network parameters $(\phi_i, \psi_i)$, observations $y$, and fixed inputs $x$ remain unchanged throughout sampling.

For a learned update, we first compute $c_i$ from the latest values of the other parameters, including any values updated earlier in the sweep. The observations $y_i$, conditions $c_i$, and fixed inputs $x_i$ enter the trained estimator through the input map in~\eqref{eq:estimator}. We then sample a draw from $q_i(\theta_i \mid y_i,c_i,x_i)$ to update block $\theta_i$.

Each conventional block $\vartheta_j$ is updated using a Gibbs draw from its full conditional $p(\vartheta_j\mid\theta,\vartheta_{-j},y,x)$ or an MCMC transition preserving that distribution, such as Metropolis--Hastings or slice sampling. Here $\vartheta_{-j}$ denotes the other conventional blocks. 

\textbf{Algorithm}~\ref{alg:sampler} outlines the resulting sampler. It discards $W$ warm-up sweeps and retains the following $T$ states. Each update immediately replaces the corresponding current value. 

\begin{algorithm}[t]
\caption{Semi-amortized block sampling}
\label{alg:sampler}
\begin{algorithmic}[1]
\REQUIRE Observations $y$, fixed model inputs $x$, trained estimators $q_1,\ldots,q_N$, initial parameters $(\theta^{(0)},\vartheta^{(0)})$, $W$ warm-up sweeps, and $T$ retained sweeps.
\FOR{$t=1,\ldots,W+T$}
  \STATE $(\theta^{(t)},\vartheta^{(t)})\gets(\theta^{(t-1)},\vartheta^{(t-1)})$.
  \FOR{$i=1,\ldots,N$}
\STATE Form $c_i^{(t)}$ from $\theta_{-i}^{(t)}$ and $\vartheta^{(t)}$.
    \STATE Draw $\theta_i^{(t)}\sim q_i(\theta_i \mid y_i,c_i^{(t)},x_i)$.
  \ENDFOR
  \FOR{$j=1,\ldots,M$}
    \STATE Draw $\vartheta_j^{(t)} \sim p(\vartheta_j\mid\theta^{(t)},\vartheta_{-j}^{(t)},y,x)$ via Gibbs or MCMC transition.
  \ENDFOR
  \IF{$t>W$}
    \STATE Store $S_{t-W}=(\theta^{(t)},\vartheta^{(t)})$.
  \ENDIF
\ENDFOR
\STATE \textbf{Return} $S_1,\ldots,S_T$.
\end{algorithmic}
\end{algorithm}


\section{ACCURACY OF THE PROPOSED SAMPLER}
\label{sec:theory}
Approximation in the learned conditionals can affect posterior accuracy. We first establish posterior invariance when the learned conditionals are exact, then bound posterior error over a finite number of sweeps in terms of conditional approximation error. We also show that small posterior-averaged conditional error does not guarantee stationary accuracy. Proofs and supplementary derivations are in Appendix~\ref{app:proofs}.

\subsection{Posterior Invariance}
For fixed observations $y$ and model inputs $x$, let $\Pi$ denote the joint posterior of $u=(\theta,\vartheta)$. Let $\widetilde P$ be the transition kernel for one sweep of \textbf{Algorithm}~\ref{alg:sampler}, and $P$ the same kernel with each learned update replaced by an exact full-conditional draw. The exact sampler preserves the posterior, $\Pi P=\Pi$. Thus, when every learned conditional is exact, SABI preserves the intended joint posterior, including dependencies between blocks (Appendix~\ref{app:joint-invariance}). Next we examine how approximation error can affect posterior accuracy.

\subsection{Finite-Sweep Error}
For learned block $i$, define its conditional KL error at parameter state $u$ and the total error averaged over the posterior:
\begin{equation}
\begin{aligned}
\kappa_i(u)
&=\KL\{p(\cdot\mid y_i,c_i(u),x_i)
       \,\|\,q_i(\cdot\mid y_i,c_i(u),x_i)\},\\
D_y&=\sum_{i=1}^N\E_\Pi[\kappa_i(u)].
\end{aligned}
\label{eq:conditional-error}
\end{equation}
Here $c_i(u)$ denotes the conditioning inputs computed from $u$. $D_y$ sums the errors across learned blocks, averaging over the exact posterior for the observed data $y$. To isolate the error introduced by learned updates, we compare them with exact conditional draws. Starting from $\Pi$, the exact sampler preserves $\Pi$, whereas the approximate sampler has distribution $\Pi\widetilde P^t$ after $t$ sweeps. This lets us bound the finite-sweep error.

\begin{proposition}[Finite-sweep error]
\label{prop:finite-sweep}
Suppose $D_y<\infty$. For every integer $t\ge1$,
\begin{align}
\KL(\Pi\,\|\,\Pi\widetilde P^t)
&\le tD_y,
\label{eq:finite-sweep-kl}\\
\|\Pi-\Pi\widetilde P^t\|_{\TV}
&\le\min\{1,\sqrt{tD_y/2}\}.
\label{eq:finite-sweep-tv}
\end{align}
\end{proposition}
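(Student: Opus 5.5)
The plan is to prove the one-sweep bound $\KL(\Pi P\,\|\,\Pi\widetilde P)\le D_y$, extend it to $t$ sweeps by the chain rule together with the data-processing inequality, and then pass to total variation with Pinsker's inequality. We use $\Pi P=\Pi$ from the posterior-invariance result.

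\textbf{One sweep.} Write $P$ and $\widetilde P$ as compositions of block kernels,
\[
P=K_1\cdots K_N L_1\cdots L_M,\qquad \widetilde P=\widetilde K_1\cdots\widetilde K_N L_1\cdots L_M .
\]
Here $K_i$ replaces $\theta_i$ by an exact draw from $p(\cdot\mid y_i,c_i(u),x_i)$, $\widetilde K_i$ replaces it by a draw from $q_i$, and the $L_j$ are the shared conventional updates. Let $\mu_k=\Pi K_1\cdots K_k$ and $\nu_k=\Pi\widetilde K_1\cdots\widetilde K_k$. Each $K_i$ preserves $\Pi$, so $\mu_k=\Pi$ for every $k$. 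Apply the chain rule for KL to the joint law of the pair (state before the update, state after the update). This gives
\[
\KL(\mu_k\|\nu_k)\le \KL(\mu_{k-1}\|\nu_{k-1})+\E_{u\sim\mu_{k-1}}\,\KL\bigl(K_k(u,\cdot)\,\|\,\widetilde K_k(u,\cdot)\bigr).
\]
Both kernels keep $u_{-k}$ fixed, so the inner divergence is exactly $\kappa_k(u)$. Since $\mu_{k-1}=\Pi$, the expectation equals $\E_\Pi[\kappa_k]$. Summing over $k$ gives $\KL(\Pi\|\nu_N)\le D_y$. The conventional kernels $L_j$ are identical in both chains, so data processing gives $\KL(\Pi L\,\|\,\nu_N L)\le D_y$, with $\Pi L=\Pi$.

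\textbf{Many sweeps.} Induct on $t$ with the same argument. Write $\Pi\widetilde P^{t}=(\Pi\widetilde P^{t-1})\widetilde P$ and $\Pi=\Pi P$. The chain rule gives
\[
\KL(\Pi\,\|\,\Pi\widetilde P^t)\le \KL(\Pi\,\|\,\Pi\widetilde P^{t-1})+\E_{u\sim\Pi}\,\KL\bigl(P(u,\cdot)\,\|\,\widetilde P(u,\cdot)\bigr).
\]
The remaining step is to show $\E_\Pi\KL(P(u,\cdot)\|\widetilde P(u,\cdot))\le D_y$, which is the main obstacle. The KL between the two sweep kernels started from a fixed point $u$ is not simply $\sum_i\kappa_i(u)$, because later blocks see $c_i$ values produced by earlier updates. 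I would handle this by applying the chain rule over the entire sweep path $(u^{(0)},u^{(1)},\dots,u^{(N)})$ rather than only the endpoint. Under the exact path started from $u\sim\Pi$, each intermediate state $u^{(k-1)}$ has law $\Pi$ by invariance. So the $k$-th conditional term averages to $\E_\Pi[\kappa_k]$, and the $L_j$ steps contribute zero. Data processing from the path to the endpoint then gives the bound $D_y$. Iterating yields $\KL(\Pi\|\Pi\widetilde P^t)\le tD_y$.

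\textbf{Total variation.} Pinsker's inequality gives $\|\Pi-\Pi\widetilde P^t\|_{\TV}\le\sqrt{\KL/2}\le\sqrt{tD_y/2}$. The trivial bound $\TV\le1$ supplies the minimum. The assumption $D_y<\infty$ is needed so that the chain-rule terms are finite and the disintegrations are legitimate; I would state the measurability of the kernels as a standing assumption.
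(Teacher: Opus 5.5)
Your proof is correct and uses the same ingredients as the paper's proof: the KL chain rule over block updates, invariance giving every exact intermediate state law $\Pi$ (so each learned step contributes $\E_\Pi[\kappa_i]$ and conventional steps contribute zero), data processing from paths to final states, and Pinsker. The only difference is bookkeeping: the paper applies the chain rule once to the whole $t$-sweep path, and your block-level recursion from the one-sweep step already carries across sweep boundaries in that same way, so the sweep-level induction you flag as the main obstacle is an unnecessary detour, though you resolve it correctly.
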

Since every exact block update preserves $\Pi$, the expected conditional KL errors accumulate to $tD_y$ over $t$ sweeps, giving~\eqref{eq:finite-sweep-kl} (Appendix~\ref{app:finite-sweep}). Equation~\eqref{eq:finite-sweep-tv} shows the a bound on posterior probability errors, where the approximate probability differs from the intended probability by at most $\min\{1,\sqrt{tD_y/2}\}$. Here $\|\cdot\|_{\TV}$ denotes total variation distance.

Training minimizes the KL averaged over simulated datasets and conditioning inputs, whereas the bound depends on observed dataset through $D_y$. Its posterior may concentrate on conditioning values where the learned updates are less accurate, despite small average error across training simulations. Small training error therefore need not imply small $D_y$, and the following proposition shows that small $D_y$ itself need not imply stationary accuracy.

\begin{proposition}[Conditional error and stationary accuracy]
\label{prop:stationary-error}
For every $\delta>0$, there is a posterior $\Pi$ over two parameter blocks $(\theta,\vartheta)$ and an approximate sampler alternating learned updates of $\theta$ with an exact updates of $\vartheta$, whose stationary distribution $\widetilde\Pi$ satisfies
\begin{equation}
D_y<\delta,
\qquad
\|\widetilde\Pi-\Pi\|_{\TV}>1-\delta.
\end{equation}
Both the approximate sampler and its exact counterpart converge to unique stationary distributions.
\end{proposition}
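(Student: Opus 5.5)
The plan is to give an explicit counterexample on a finite state space, where every quantity can be computed exactly. Take $\theta,\vartheta\in\{0,1\}$ and define the posterior $\Pi$ with parameters $\varepsilon,\eta,\gamma\in(0,1)$ to be tuned:
\[
\Pi(0,0)=1-\varepsilon-2\eta,\qquad \Pi(1,1)=\varepsilon,\qquad \Pi(0,1)=\Pi(1,0)=\eta .
\]
The intuition is that the posterior is concentrated on $(0,0)$, with a tiny but strongly "sticky" secondary mode at $(1,1)$. The exact full conditionals are then
\[
p(\theta=1\mid\vartheta=0)=\frac{\eta}{1-\varepsilon-\eta},\qquad p(\theta=0\mid\vartheta=1)=\frac{\eta}{\varepsilon+\eta},
\]
and similarly for $\vartheta\mid\theta$.

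The learned update is chosen as follows. Set $q(\cdot\mid\vartheta=1)=p(\cdot\mid\vartheta=1)$ exactly. Set $q(\theta=1\mid\vartheta=0)=\gamma$, so that $q(\theta=0\mid\vartheta=0)=1-\gamma$. The $\vartheta$-update is the exact Gibbs draw.

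\textbf{Step 1: bound $D_y$.} Since $\kappa(u)=0$ whenever $\vartheta=1$,
\[
D_y=\Pi(\vartheta=0)\,\KL\{p(\cdot\mid\vartheta=0)\,\|\,q(\cdot\mid\vartheta=0)\}.
\]
Writing $a=p(\theta=1\mid\vartheta=0)=O(\eta)$, this KL equals
\[
(1-a)\log\frac{1-a}{1-\gamma}+a\log\frac{a}{\gamma}\;\le\;-\log(1-\gamma)+O(\eta\log(1/\eta)).
\]
Hence $D_y\le 2\gamma$ for $\gamma$ and $\eta$ small.

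\textbf{Step 2: existence and uniqueness of stationary distributions.} All entries of both the exact and the approximate sweep kernels on the four states are strictly positive, because $\eta,\gamma>0$ make every conditional probability positive. Both kernels are therefore irreducible and aperiodic on a finite space, so each has a unique stationary distribution and converges to it. For the exact kernel this distribution is $\Pi$, by the posterior-invariance result of the previous subsection.

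\textbf{Step 3: locate $\widetilde\Pi$ (the main quantitative step).} Consider the induced two-state chain on $\vartheta$.
\begin{itemize}
\item From $\vartheta=0$, a sweep moves to $\vartheta=1$ with probability
\[
r_{01}=\gamma\, p(\vartheta=1\mid\theta=1)+(1-\gamma)\,p(\vartheta=1\mid\theta=0)\;\ge\;\gamma\,\frac{\varepsilon}{\varepsilon+\eta}.
\]
\item From $\vartheta=1$, a sweep moves back with probability
\[
r_{10}\;\le\; p(\theta=0\mid\vartheta=1)+p(\vartheta=0\mid\theta=1)\;\le\;\frac{2\eta}{\varepsilon}.
\]
\end{itemize}
The sequence of $\vartheta$-values is itself Markov, since the sweep draws $\theta$ afresh given $\vartheta$. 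Its stationary mass therefore satisfies
\[
\widetilde\Pi(\vartheta=1)=\frac{r_{01}}{r_{01}+r_{10}}\;\ge\;\frac{\gamma/2}{\gamma/2+2\eta/\varepsilon}.
\]
Now choose $\gamma=\delta/4$, then $\varepsilon\le\delta/4$, and finally $\eta=\varepsilon\gamma^2\cdot(\text{small constant})$. With these choices $\widetilde\Pi(\vartheta=1)\ge1-\delta/4$, while $\Pi(\vartheta=1)=\varepsilon+\eta\le\delta/2$. This gives
\[
\|\widetilde\Pi-\Pi\|_{\TV}\;\ge\;\widetilde\Pi(\vartheta=1)-\Pi(\vartheta=1)\;>\;1-\delta .
\]

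The main obstacle is ordering the parameter choices consistently. The value $\eta$ must be small enough relative to $\varepsilon\gamma$ that the escape rate from $(1,1)$ is negligible compared with the entry rate $\gamma$. The $O(\eta\log(1/\eta))$ term in $D_y$ must also remain below $\delta/2$. Both requirements are met by fixing $\gamma$ and $\varepsilon$ first and then taking $\eta\to0$, which I would verify explicitly.
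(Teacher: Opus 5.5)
Your proof is correct and establishes the proposition as stated. It works by a different mechanism from the paper's, although both use a $2\times2$ posterior and reduce to the two-state $\vartheta$-chain.

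\textbf{The paper's construction.} It places masses proportional to $1,\varepsilon,\varepsilon^3,\varepsilon$ on $(0,0),(0,1),(1,0),(1,1)$ and keeps $q(\cdot\mid\vartheta=0)$ exact. It puts a \emph{large} error at the \emph{rare} conditioning value: $q(\theta=1\mid\vartheta=1)=1-\varepsilon^2$ instead of $1/2$. The local KL there is of order $\log(1/\varepsilon)$, but $D_y$ suppresses it through the weight $2\varepsilon/Z$. It is the learned update itself that makes $(1,1)$ hard to leave.

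\textbf{Your construction.} You put a uniformly small error, $\sup_u\kappa(u)\le-\log(1-\gamma)$, at the \emph{common} value $\vartheta=0$. The trapping then comes from the posterior's own nearly isolated mode at $(1,1)$, which exact updates leave only at rate $O(\eta/\varepsilon)=O(\gamma^2)$. This forces your exact sampler to mix slowly. Writing $\widetilde\Pi-\Pi=\widetilde\Pi(\widetilde P-P)+(\widetilde\Pi-\Pi)P$ gives $\|\widetilde\Pi-\Pi\|_{\TV}\le\sup_u\|\widetilde P(u,\cdot)-P(u,\cdot)\|_{\TV}/(1-\rho)$, with $\rho$ the Dobrushin coefficient of $P$. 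So an $O(\gamma)$ uniform perturbation can shift the stationary law by nearly one only if $1-\rho=O(\gamma)$.

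\textbf{What each buys.} Your example shows that even small \emph{worst-case} conditional KL does not suffice, which is a stronger conclusion about the error measure. The price is a metastable exact chain. The paper's example gives the complementary fact: the failure persists even though the exact sweep contracts with coefficient at most $1/2$ uniformly in $\varepsilon$. That pins the problem on the $\Pi$-averaging in $D_y$.

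Consequently, two things do not transfer to your construction: the paper's post-proof remark that the counterexample holds even when the exact sampler mixes rapidly, and its main-text explanation that errors at rare conditioning values make those values hard to leave. Neither is part of the formal statement.
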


Thus, arbitrarily small conditional error averaged under $\Pi$ does not guarantee an accurate stationary distribution. The learned update is inaccurate at conditioning values that are rare under $\Pi$ and contribute little to $D_y$ (\ref{app:stationary-error}). The approximate updates make these values difficult to leave, so the chain spends most of its time there. Errors that receive little weight in $D_y$ can thus dominate the approximate sampler's behavior. Errors can also propagate through the conditioning inputs supplied to subsequent updates, as illustrated for Gaussian conditionals in Appendix~\ref{app:gaussian-conditional}. 

More generally, the effect of conditional errors on individual posterior expectations depends on their persistence and cancellation across updates (Appendix~\ref{app:poisson}). Our results on real data at scale show that such errors are less concerning in practice and can be reduced with longer training runs.

\section{RELATED WORK}
\label{sec:rel_work}

The literature on graphical model estimation is vast. The paragraphs below focus exclusively on amortized approaches explicitly situated in or adjacent to SBI.

\paragraph{Structured posterior approximations.}
Graph-structured inference methods use the generative model to construct posterior approximations. Faithful inversion derives inference-network factorizations that avoid spurious conditional independencies \citep{Webb2018, habermann2026graphical}. In contrast, \citet{ambrogioni2021automatic} and \citet{rouillard2022adavi} adopt simpler variational factorizations that need not preserve all posterior dependencies, retaining the prior dependency structure and imposing mean-field independence across latent blocks, respectively. Importantly, these approaches target full amortization rather than selected conditional transitions.

\paragraph{Hierarchical neural posterior estimation.}
Relatedly, hierarchical neural posterior estimators exploit posterior factorizations, sharing networks across groups and respecting exchangeability \citep{rodrigues2021hnpe, heinrich2024hierarchical, rouillard2023pavi, habermann2025multilevel, habermann2026graphical}. Fully amortized variants learn population- and group-level posterior factors, but require complete hierarchical datasets for training. \citet{arruda2024amortized} instead amortize group-level posteriors and fit population-level parameters with a maximum likelihood estimator. More recently, \citet{arruda2026compositional} avoid complete hierarchical simulations by composing scores of exchangeable factors at the cost of error accumulation, while still pursuing fully amortized inference.

\paragraph{Amortized Monte Carlo proposals.}
Another line of work learns neural proposals while retaining Monte Carlo correction.
Inference compilation and its extensions use neural proposals in sequential importance sampling \citep{le2017inference,baydin2019efficient,harvey2022attention}.
Meta-learned block proposals and lightweight inference compilation instead retain Metropolis--Hastings correction \citep{wang2018meta,liang2021accelerating}, while amortized population Gibbs sampling uses importance weighting and resampling \citep{wu2020apg}.
These corrections require computable ratios involving model and proposal densities, including the relevant prior and likelihood contributions, potentially evaluated on an augmented simulator trace.
We instead use learned full-conditional approximations directly as sampling updates, without acceptance tests or importance weights.

\paragraph{Likelihood-free approximate Gibbs.}
The closest work is the likelihood-free approximate Gibbs sampler of \citet{rodrigues2020likelihood}, which replaces intractable full conditionals with neural regression. We extend this backbone with modern SBI machinery (i.e., flexible neural estimators and learned summaries), and analyze sampler correctness and the propagation of approximation errors. Further, we provide benchmarks against gold-standard samplers and demonstrate scalability to large hierarchical and spatiotemporal models, including a simulation-based model with over 30,000 parameters and a spatiotemporal model with nearly two million observations.

\section{EXPERIMENTS}

\subsection{TIMSS: Three-level hierarchical model}
\label{sec:timss}

Our first application evaluates the accuracy and scalability of SABI in a large 21,757 parameter three-level hierarchical model of the TIMSS dataset, with 256,416 students nested in 10,820 schools across 54 countries \citepalias{timss2023grade4}. We model fourth-grade mathematics achievement as a function of home resources. The response is the first of  five plausible values, generated through item response theory \citep{bezirhan2024timss}. We train neural estimators for conditional updates for both school and country coefficients. The model's analytic full conditionals allow direct comparison with a Gibbs sampler, providing an \textit{oracle reference} for posterior accuracy and computational efficiency. Data preparation, priors, and sampling details are given in the supplementary material.

We use a three-level Gaussian linear model with students nested within schools and countries. School intercepts and home-resources slopes vary around country coefficients, which in turn vary around population means (Appendix \ref{app:timss}). The model estimates intercept-slope correlations at both the school and country levels. Figure~\ref{fig:timss-graph} shows a graphical representation of the model.

\begin{figure}[!htbp]
\centering
\begin{tikzpicture}[
  x=1.3cm,y=0.82cm,>=Stealth,
  latent/.style={circle,draw,minimum size=7.5mm,inner sep=1pt,font=\small},
  neuralA/.style={latent,draw=neuralgreen,fill=neuralgreenfill},
  neuralB/.style={latent,draw=neuralpurple,fill=neuralpurplefill},
  observed/.style={latent,fill=black!15},
  plate/.style={draw,rounded corners=2pt},
  every path/.style={line width=0.55pt}
]
  \draw[plate] (-0.35,-0.65) rectangle (4.75,-5.20);
  \draw[plate] (-0.10,-1.85) rectangle (4.45,-4.70);
  \draw[plate] (0.15,-2.95) rectangle (4.15,-4.20);
  \node[anchor=east,font=\scriptsize] at (4.70,-4.95) {$c=1,\ldots,C$};
  \node[anchor=east,font=\scriptsize] at (4.40,-4.45) {$s=1,\ldots,S_c$};
  \node[anchor=south east,font=\scriptsize] at (4.10,-4.12) {$j=1,\ldots,n_{cs}$};

  \node[latent] (mu) at (0.25,0) {$\mu$};
  \node[latent] (countrycov) at (1.55,0) {$\Sigma_C$};
  \node[latent] (schoolcov) at (3.00,0) {$\Sigma_S$};
  \node[latent] (noise) at (4.40,0) {$\sigma^2$};
  \node[neuralB] (country) at (0.90,-1.25) {$b_c$};
  \node[neuralA] (school) at (2.15,-2.40) {$\theta_{cs}$};
  \node[observed] (x) at (0.75,-3.65) {$x_{csj}$};
  \node[observed] (y) at (2.15,-3.65) {$y_{csj}$};

  \draw[->] (mu) -- (country);
  \draw[->] (countrycov) -- (country);
  \draw[->] (country) -- (school);
  \draw[->] (schoolcov) -- (school);
  \draw[->] (school) -- (y);
  \draw[->] (noise) -- (4.40,-1.60) -- (3.80,-2.70) -- (y);
  \draw[->] (x) -- (y);
\end{tikzpicture}
\caption{TIMSS hierarchy fit in \textbf{Experiment 1}. Students nested within schools and countries/systems. Separate neural estimators update school coefficients $\theta_{cs}=(\alpha_{cs},\beta_{cs})$ (green) and country coefficients $b_c=(\alpha_c,\beta_c)$ (purple). Population means $\mu=(\mu_\alpha,\mu_\beta)$, covariance matrices $\Sigma_C$ and $\Sigma_S$, and residual variance $\sigma^2$ receive Gibbs updates. Gray nodes denote the achievement scores ($y$), and the home-resources scores ($x$).}
\label{fig:timss-graph}
\end{figure}

Inference combines neural draws for school and country coefficients with conjugate Gibbs updates for the remaining parameters. The school network conditions on each school's data, the country coefficients, and the residual variance. The country network conditions on the school coefficients, the global parameters, and the residual variance.

\begin{figure}[!htb]
\centering
\includegraphics[trim=1bp 23bp 0bp 10bp,clip,width=\linewidth]{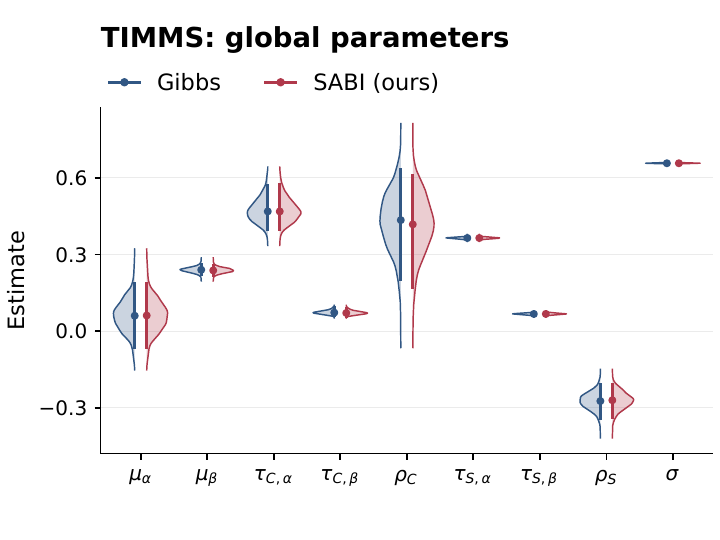}
\par\smallskip
\includegraphics[trim=4bp 20bp 5bp 3bp,clip,width=0.49\linewidth]{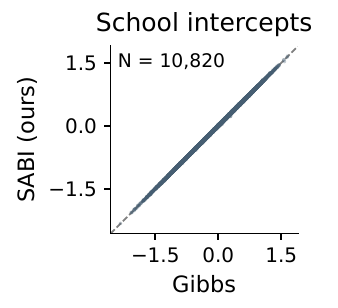}\hfill
\includegraphics[trim=4bp 20bp 5bp 3bp,clip,width=0.49\linewidth]{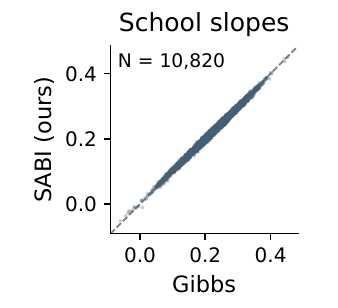}
\par\smallskip
\includegraphics[trim=4bp 3bp 5bp 3bp,clip,width=0.49\linewidth]{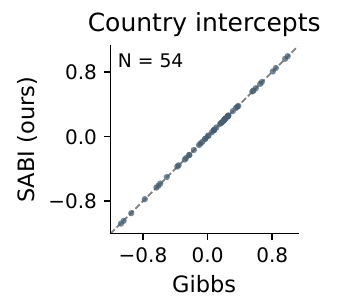}\hfill
\includegraphics[trim=4bp 3bp 5bp 3bp,clip,width=0.49\linewidth]{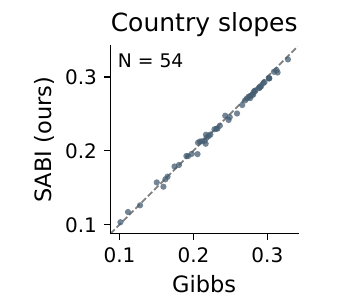}
\caption{TIMSS posterior comparison in \textbf{Experiment 1}. Top: global posterior medians (dots) and 95\% credible intervals (bars), using Gibbs (blue) or SABI (red); $C$ and $S$ denote country and school levels. Bottom: paired posterior medians for every school and country intercept and slope, with Gibbs estimates on the x axis and SABI estimates on the y axis. We observe high agreement across all parameters.}
\label{fig:timss-global}
\end{figure} 

We compare SABI to a full-Gibbs sampler that replaces the two neural updates with draws from their analytic Gaussian conditionals. Full-Gibbs was 10.4 times faster than SABI. \autoref{fig:timss-global} compares the nine global marginals and the school and country parameter medians for the full dataset, showing close marginal agreement between the two samplers. Global agreement was further corroborated by a classifier two-sample test \citep[C2ST;][]{lopez-pazRevisitingClassifier2018}, showing an accuracy score of $0.508$ across the joint global parameter distribution, indicating indiscriminate draws, although there was still mismatch in the full joint including all locals (accuracy: $0.764$).

\subsection{Citi Bike: Spatial and Temporal Dependence}
\label{sec:citibike}

Our second application evaluates SABI in a large spatiotemporal model with an expensive observation likelihood. We model hourly Citi Bike departures at 1,427 stations over eight weeks, comprising 11,416 station--weeks and 1,917,888 observations \citep{citibike2024data}. 

We use a Poisson–lognormal distribution, with an expected weekly departures and overdispersion parameter for every station in every week. A Gaussian graph prior links nearby stations and consecutive weeks (Figure~\ref{fig:citibike-graph}). An hourly profile is shared across stations and weeks that captures the distribution of hourly departures across the week (see Appendix~\ref{app:citibike}). The observation model introduces a latent rate for each count; critically, \textit{SABI avoids the need to numerically integrate out or explicitly sample these rates during inference.} 

\begin{figure}[!htbp]
\centering
\begin{tikzpicture}[
  x=0.99cm,y=0.95cm,>=Stealth,
  latent/.style={circle,draw,minimum size=7.5mm,inner sep=1pt,font=\small},
  neuralA/.style={latent,draw=neuralgreen,fill=neuralgreenfill},
  neuralB/.style={latent,draw=neuralpurple,fill=neuralpurplefill},
  state/.style={neuralA},
  observed/.style={latent,fill=black!15},
  plate/.style={draw,rounded corners=2pt},
  every path/.style={line width=0.55pt}
]
  \node[latent] (mu) at (0.25,0) {$\mu$};
  \node[latent] (sigma) at (1.65,0) {$\Sigma$};
  \node[latent] (space) at (3.05,0) {$\rho_S$};
  \node[latent] (time) at (4.45,0) {$\rho_T$};

  \draw[plate] (-0.15,-0.75) rectangle (4.85,-4.50);
  \foreach \name/\xx in {mu/0.25,sigma/1.65,space/3.05,time/4.45}
    \draw[->] (\name) -- (\xx,-0.75);
  \node[font=\scriptsize,anchor=north] at (2.35,-0.86)
    {Station--week Gaussian field};

  \foreach \row/\station/\yy in {1/1/-1.65,2/2/-2.80,3/i/-3.95} {
    \foreach \col/\week/\xx in {1/1/0.55,2/2/2.20,3/t/3.85} {
      \node[state] (v\row\col) at (\xx,\yy) {$\theta_{\station\week}$};
    }
    \draw (v\row1) -- (v\row2);
    \draw (v\row2) -- node[fill=white,inner sep=1pt,font=\scriptsize] {$\cdots$} (v\row3);
  }
  \foreach \col in {1,2,3} {
    \draw (v1\col) -- (v2\col);
    \path (v2\col) -- coordinate[midway] (dots\col) (v3\col);
    \foreach \offset in {-0.65,0,0.65}
      \fill ([yshift=\offset mm]dots\col) circle[radius=0.3pt];
  }
  \node[font=\scriptsize,fill=white,inner sep=1pt] at (1.37,-2.80) {time};
  \node[font=\scriptsize,rotate=90] at (4.48,-2.80) {space};

  \draw[plate] (5.30,-2.86) rectangle (8.06,-4.50);
  \node[anchor=south east,font=\scriptsize] at (7.98,-4.38)
    {$h=1,\ldots,168$};
  \node[neuralB] (u) at (5.88,-1.93) {$u$};
  \node[font=\scriptsize,align=center] at (5.88,-1.10)
    {Shared hourly\\profile};
  \node[latent] (ell) at (5.88,-3.43) {$\ell_{ith}$};
  \node[observed] (y) at (7.47,-3.43) {$y_{ith}$};
  \draw[->] (v33) -- (ell);
  \draw[->] (u) -- (ell);
  \draw[->] (ell) -- (y);
\end{tikzpicture}
\caption{Citi Bike Gaussian field fit in \textbf{Experiment 2}. Departures $y_{ith}$ at station $i$ in week $t$ and hour $h$. Neural estimators update station--week parameters $\theta_{it}=(a_{it},b_{it})^\top$, controlling activity and overdispersion (green), and the shared hourly profile $u$ (purple), that together determine latent activity rates. Population mean $\mu$, covariance $\Sigma$, and spatial and temporal coupling strengths $\rho_S$ and $\rho_T$ receive conventional updates.}
\label{fig:citibike-graph}
\end{figure}

\begin{figure}[!htb]
\centering
\includegraphics[trim=15bp 26bp 0bp 23bp,clip,width=\linewidth]{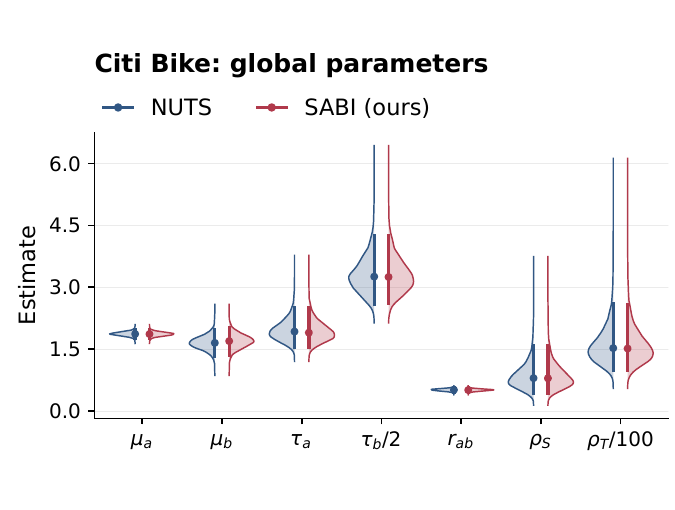}
\par\smallskip
\includegraphics[trim=10bp 13bp 2bp 23bp,clip,width=0.49\linewidth]{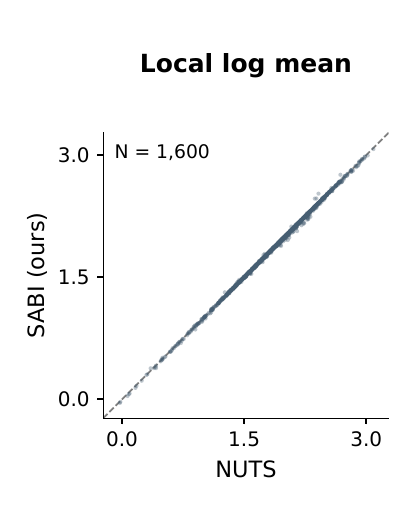}\hfill
\includegraphics[trim=10bp 13bp 2bp 23bp,clip,width=0.49\linewidth]{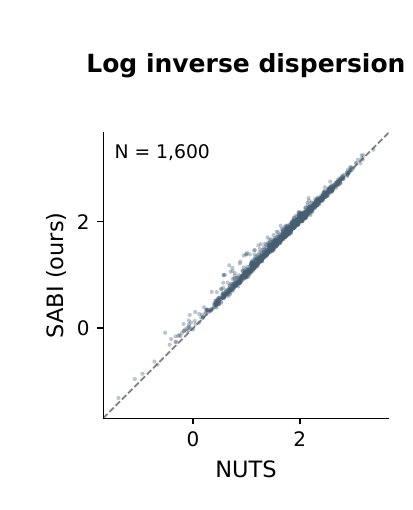}
\caption{Citi Bike posterior comparison for 200 stations in \textbf{Experiment 2}. Top: global posterior medians (dots) and 95\% credible intervals using Stan (blue) or SABI (red). $\tau_b$ is divided by two and $\rho_T$ by 100. Bottom: paired posterior medians of $a_{it}$ and $b_{it}$ for all 1,600 station--weeks, with Stan on the x-axis and SABI on the y-axis. We find high agreement across all parameters, with slight overestimation of local inverse dispersion that did not translate to global bias.}
\label{fig:citibike-global}
\end{figure}

Inference alternates neural draws from the station--week network, the shared hourly profile network and conventional updates of the global parameters. We assess posterior accuracy and computational efficiency against NUTS as implemented in Stan \citep{stan}, using subsets of 12, 36, 100, and 200 stations. \autoref{fig:citibike-efficiency} shows that SABI was more than 3 times more efficient at higher station counts, as measured by effective sample size (ESS) per second.

\autoref{fig:citibike-global} compares global posterior densities and station--week parameter medians for the 200 stations analysis. Global posterior means differ from Stan by at most 0.26 Stan posterior standard deviations. Global agreement was again assessed using C2ST, showing only mild discrimation of the joint distribution of global parameters with an accuracy score of $0.62$. However, the full joint including all locals was perfectly discriminable (accuracy: $1$), likely because of the small but consistent bias in the inverse dispersion parameters.

The full model has 23,006 free parameters after marginalizing the latent log rates. Explicitly sampling these rates increases the parameter count to 1,940,894, exceeding the available memory in our Stan implementation. Numerical marginalization was prohibitively slow in Stan, and we stopped that run after one day. SABI avoids both costs through learned updates, requiring neither explicit sampling nor integration. Four chains, each with 500 burn-in sweeps and 1,250 retained draws, took 157 minutes. The maximum $\widehat R$ was 1.088; median bulk and tail ESS \citep{vehtari2021rhat} were 4,005 and 4,491, with minima of 38 and 144. Further posterior predictive checks are included in Appendix~\ref{app:citi-pred}.

\subsection{DMC: Simulation-Based Inference}
\label{sec:dmc}
Our third application evaluates SABI for an observation model \textit{lacking a closed form likelihood} and demonstrates reuse of the same network across different hierarchical structures. We study decision-making in conflict tasks, in which irrelevant information can support or oppose the required response, using the Diffusion Model for Conflict Tasks \citep[DMC;][]{ulrich2015dmc}.

\begin{figure}[!htbp]
\centering
\includegraphics[width=\linewidth]{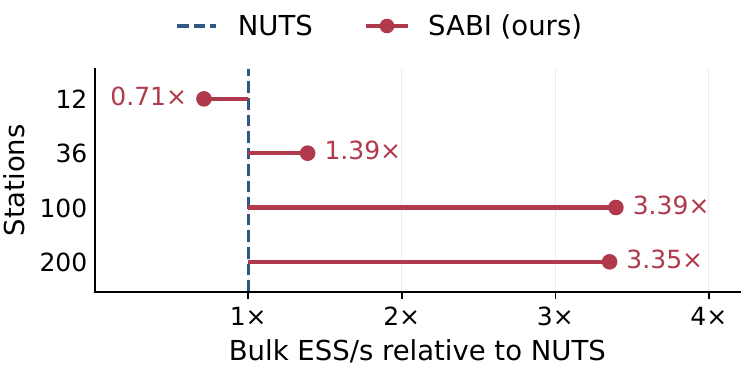}
\caption{Citi Bike sampling efficiency in \textbf{Experiment 2}: median bulk ESS/s across shared parameters for SABI relative to NUTS. The dashed line marks equal efficiency; timing includes warmup.}
\label{fig:citibike-efficiency}
\end{figure}

\begin{figure}[!htbp]
\centering
\begin{tikzpicture}[
  x=1cm,y=0.83cm,>=Stealth,
  latent/.style={circle,draw,minimum size=7.5mm,inner sep=1pt,font=\small},
  neuralA/.style={latent,draw=neuralgreen,fill=neuralgreenfill},
  observed/.style={latent,fill=black!15},
  plate/.style={draw,rounded corners=2pt},
  every node/.style={inner sep=1pt},
  every path/.style={line width=0.55pt}
]
  \draw[plate] (0.45,-0.55) rectangle (5.20,-5.20);
  \node[anchor=south east,font=\scriptsize] at (5.15,-5.12) {$p=1,\ldots,P$};
  \draw[plate] (1.80,-0.85) rectangle (6.70,-5.85);
  \node[anchor=south east,font=\scriptsize] at (6.65,-5.77) {$t=1,\ldots,T$};
  \draw[plate] (2.10,-1.65) rectangle (4.95,-4.70);
  \node[anchor=south east,font=\scriptsize] at (4.90,-4.62) {$d=1,\ldots,D_{pt}$};
  \draw[plate] (2.30,-2.80) rectangle (4.75,-4.20);
  \node[anchor=south east,font=\scriptsize] at (4.70,-4.12) {$N_{dpt}$ participants};
  \node[latent] (pubscale) at (1.10,0) {$\omega_P^2$};
  \node[latent] (sigma) at (2.75,0) {$\sigma^2$};
  \node[latent] (datascale) at (4.00,0) {$\omega_D^2$};
  \node[latent] (taskscale) at (5.90,0) {$\omega_T^2$};
  \node[latent] (publication) at (1.10,-1.10) {$u_p$};
  \node[latent] (dataset) at (4.00,-2.15) {$v_{dpt}$};
  \node[latent] (task) at (5.90,-1.35) {$w_t$};
  \node[latent] (mu) at (-0.05,-3.35) {$\mu$};
  \node[neuralA] (participant) at (2.95,-3.35) {$\theta_i$};
  \node[observed] (data) at (4.15,-3.35) {$y_i$};
  \draw[->] (mu) -- (participant);
  \draw[->] (publication) -- (participant);
  \draw[->] (task) -- (5.40,-2.65) -- (3.60,-2.75) -- (participant);
  \draw[->] (pubscale) -- (publication);
  \draw[->] (datascale) -- (dataset);
  \draw[->] (taskscale) -- (task);
  \draw[->] (dataset) -- (participant);
  \draw[->] (sigma) -- (participant);
  \draw[->] (participant) -- (data);
\end{tikzpicture}
\caption{ACDC hierarchy of \textbf{Experiment 3}. Overlapping plates represent publications and tasks, with $D_{pt}$ datasets in each combination. Parameter nodes collect six coordinates; gray $y_i$ contains a participant's response times and choices. Fixed trial congruencies are omitted. Light green indicates updates from the participant network; hierarchy parameters receive conventional updates.}
\label{fig:acdc-graph}
\end{figure}

\begin{figure}[!htb]
\centering
\includegraphics[width=\linewidth]{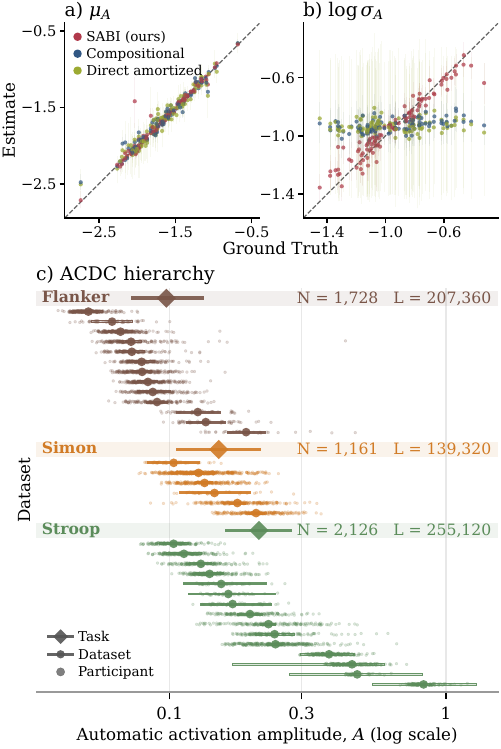}
\caption{DMC estimation and ACDC inference of \textbf{Experiment 3}. (a,b) Posterior means and 95\% credible intervals for the mean $\mu_A$ and log standard deviation $\log\sigma_A$ of $\log A$ across 100 paired datasets. The three methods show similar recovery for the means, but SABI vastly outperforms the other two on the variance. (c) Posterior medians and 95\% intervals for task- and dataset-level population medians of $A$; dots are participant posterior medians. $N$ denotes the number of participants and $L$ the number of trials.}
\label{fig:acdc-results}
\end{figure}

The likelihood function of the DMC is unknown, therefore we cannot compare directly with conventional MCMC methods. Instead, we compare SABI with two alternative SBI approaches, compositional diffusion \citep{arruda2026compositional} and direct amortized posterior estimation \citep{habermann2025multilevel}, using a hierarchical model in which $\theta_{ik}\sim\Normal(\mu_k,\sigma_k^2)$ for each of the six participant parameters. Training budget was closely matched across all three methods, using 26,200,000 ($\pm 0,05\%$) participant simulations, see appendix Appendix~\ref{app:dmc} for details

We evaluated all three methods on 100 paired datasets of 500 participants. Table~\ref{tab:dmc-times} reports RMSE and expected calibration error \citep[ECE;][]{ardizzoneAnalyzingInverse2019} averaged over the six population means and variances, alongside training time and sample efficiency. SABI achieves lower RMSE than both baselines. Direct amortization yields low ECE, but mostly due to poor posterior updating, particularly for variances, leading to just returning the prior (see also panels a and b of Figure~\ref{fig:acdc-results}). Compositional inference has both higher RMSE and higher ECE. Three of 100 datasets had $\widehat{R}>1.1$ (maximum 1.52) using SABI; excluding these datasets decreased SABI's RMSE by 20.3\% and increased ECE by 2.9\% (see also Appendix~\ref{app:dmc}).

\begin{table}[!htbp]
\centering
\small
\setlength{\tabcolsep}{2pt}
\caption{Reference comparisons at $N=500$ (100 paired datasets) for \textbf{Experiment 3}. Expected calibration error (ECE) and RMSE for global means and variances; lower is better. SABI has substantially lower RMSE. Direct amortization also has low ECE, since its posteriors remain close to the prior.}
\label{tab:dmc-times}
\begin{tabular*}{\linewidth}{@{\extracolsep{\fill}}lllllll@{}}
\toprule
Method & Train & ESS/s & \multicolumn{2}{c}{ECE} & \multicolumn{2}{c}{RMSE}\\
\cmidrule(lr){4-5}\cmidrule(lr){6-7}
& (min) & & $\mu$ & $\sigma^2$ & $\mu$ & $\sigma^2$\\
\midrule
SABI (ours) & 179 & 32 & 0.042 & 0.059 & 0.120 & 0.307\\
Compositional & 292 & 12${}^{*}$ & 0.157 & 0.308 & 0.246 & 0.738\\
Direct amortized & 116 & 3853${}^{*}$ & 0.031 & 0.030 & 0.377 & 0.764\\
\bottomrule
\end{tabular*}
\par\smallskip
{\footnotesize\raggedright
${}^{*}$Nominal sample count from the network; SABI uses median bulk ESS.\par}
\end{table}

Finally, we fit the DMC to empirical data from the large ACDC dataset \citep{haaf2025acdc}. We use decision-making trials from the three most popular conflict tasks, resulting in 601,800 trials from 5,015 participants across 34 datasets and ten publications. For each DMC parameter, we specify a random-intercept model with crossed publication and task effects and nested dataset effects, yielding 312 global parameters and a total of 30,402 parameters (\autoref{fig:acdc-graph}). 

We only use SABI for the model of ACDC, as the competing methods already exhibited substantial recovery errors at much lower dimensionality. A key strength of SABI is that participant-level training is separate from the global hierarchy, allowing different joint models to reuse the network. Here we reuse the participant network in the richer model without retraining. 

Three chains each retained 2,500 draws in 10.79 minutes. The maximum $\widehat R$ was 1.037. Across global parameters, median bulk and tail ESS were 6,540 and 6,699, with minima of 71 and 23. Figure~\ref{fig:acdc-results}c shows automatic amplitude parameter $A$ across the 3 tasks, all 34 datasets and 5015 participants. See Appendix~\ref{app:dmc-pred} for posterior predictive checks.

\section{DISCUSSION}
In this work, we validated SABI, a semi-amortized Bayesian method for large graphical models. SABI combines the modeling flexibility of MCMC with fast, reusable neural conditional updates, without requiring a learned approximation of the entire joint posterior or post-hoc corrections. Across three experiments at a realistic scale, we obtained close agreement with Gibbs and NUTS reference hyperposteriors, faster inference than NUTS on a model with latent variables, and markedly better calibration than compositional inference under the same simulation budget.

\paragraph{Limitations}
SABI is not fully amortized per construction. Hence, each new dataset requires iterative sampling, warm-up, and convergence assessment. Computational gains must also offset simulation and training costs, as in all neural inference applications. As our learned updates receive no \textit{post hoc} correction, their approximation errors can bias the resulting posterior and are not eliminated by longer chains. Thus, accurate inference depends on conditional accuracy across the contexts encountered during sampling. Nonetheless, our results indicate clear improvement over existing amortized methods.

\paragraph{Future Directions}
One direction is to correct SABI's neural proposals using Metropolis-Hastings steps based on a reference posterior \citep{wang2018meta}, obtained from a separately trained estimator or from a teacher network. One could also learn local marginal likelihoods of the observed data given the conditioning parameters. MCMC would then use these learned factors to update the conditioning parameters without drawing the local parameters (Appendix~\ref{app:local-marginal-likelihoods}).

\section*{Acknowledgments}

N.S. and A.H. were supported by an Australia--US Multi-University Research Initiative grant (DSTG AUSMURIV000003, ONR/DoD N00014-23-1-2792). P.-C.B. acknowledges support from the Deutsche Forschungsgemeinschaft (DFG, German Research Foundation) through projects 508399956 and 528702768 and Collaborative Research Center 391 (Spatio-Temporal Statistics for the Transition of Energy and Transport), project 520388526. S.T.R. was funded by the National Science Foundation under Grant No.~2448380.

\par
\bibliographystyle{apalike}
\bibliography{references}

\clearpage
\setlength{\textfloatsep}{20pt plus 2pt minus 4pt}
\setlength{\floatsep}{12pt plus 2pt minus 2pt}
\setlength{\intextsep}{12pt plus 2pt minus 2pt}
\onecolumn
\appendix
\section{PARAMETER TRANSFORMATIONS}
\label{app:coordinates}
A learned block may be trained in coordinates $z_i$ related to the model parameters by a map $\theta_i=T_i(z_i)$. The map may depend on $(c_i,x_i)$ but is fixed during training, and the inputs needed to evaluate it are retained in the network input. For each $(c_i,x_i)$, assume the map is a differentiable bijection with differentiable inverse and nonsingular Jacobian. A density $q^z_{\phi_i}$ in the transformed coordinates then defines the estimator in model coordinates:
\begin{equation}
q_i(\theta_i\mid y_i,c_i,x_i)
=q^z_{\phi_i}\!\left(T_i^{-1}(\theta_i)\mid h_{\psi_i}(y_i,c_i,x_i)\right)
\left|\det D T_i^{-1}(\theta_i)\right|.
\label{eq:changeofvariables}
\end{equation}
Here $D$ denotes the Jacobian. Applying the same change of coordinates to the exact full conditional preserves both KL and total variation distance. With $z_i=T_i^{-1}(\theta_i)$, the expected negative log-likelihood in model coordinates differs from that in transformed coordinates by $\E\log|\det D T_i(z_i)|$. This expectation is over training simulations and is independent of $\phi_i$ and $\psi_i$ because the map is fixed during training.

For the Gaussian conditional prior~\eqref{eq:gaussian-conditional-prior}, the map $\theta_i=\mu_i+L_i z_i$, with $L_iL_i^\top=\Sigma_i$, gives prior-standardized coordinates and Jacobian factor $|\det L_i|^{-1}$. Draws are mapped back to $\theta_i$ before subsequent updates.

\section{REFINEMENT DIAGNOSTICS}
\label{app:refine}
For selected conditioning values and designs, we draw parameters from the conditional prior and simulate observations. Conditional calibration checks compare the generating parameters with independent neural posterior draws. Under exact inference, their ranks are uniform across simulation replicates \citep{talts2018sbc}. Checking separate conditioning regions and data summaries helps reveal errors that cancel when pooled.

Even simpler, sweep checks initialize simulated datasets at their generating parameter states and apply one or more inference sweeps. Exact posterior updates preserve the joint distribution of parameters and observations \citep{geweke2004getting}. Systematic changes in means, variances or dependencies across independent datasets therefore indicate error. Quantities conditioned on during simulation remain fixed during the check. These diagnostics can guide refinement by targeted simulation coverage in problematic regions.

\section{PROOFS AND SUPPLEMENTARY DERIVATIONS}
\label{app:proofs}

We use the parameter vector $u=(\theta,\vartheta)$, posterior distribution $\Pi$, and sweep kernels $P$ and $\widetilde P$ from Section~\ref{sec:theory}. Observations $y$ and model inputs $x$ remain fixed throughout. Index all blocks in the update order of Algorithm~\ref{alg:sampler}, so $u_i=\theta_i$ for $i\le N$ and $u_{N+j}=\vartheta_j$ for $j\le M$. Let $P_i$ be the transition kernel for block $i$ in the exact sampler and $\widetilde P_i$ the corresponding kernel in the approximate sampler. Conventional updates have $\widetilde P_i=P_i$. Then $P=P_1\cdots P_{N+M}$ and $\widetilde P=\widetilde P_1\cdots\widetilde P_{N+M}$. For probability distributions $\nu$ and $\nu'$, total variation distance is $\|\nu-\nu'\|_{\TV}=\sup_A|\nu(A)-\nu'(A)|$, where the supremum is over measurable sets $A$.

\subsection{Posterior invariance}
\label{app:joint-invariance}
An exact conditional draw replaces block $i$ while keeping the remaining blocks $u_{-i}$ fixed. Starting from $\Pi$, the resulting joint density is
\begin{equation}
p(u_{-i}\mid y,x)\,p(u_i\mid y,u_{-i},x)=p(u\mid y,x).
\end{equation}
A conventional MCMC transition also preserves this joint distribution when it preserves $p(u_i\mid y,u_{-i},x)$ for each fixed $u_{-i}$. Therefore $\Pi P_i=\Pi$ for every block update in the exact sampler, and composition gives $\Pi P=\Pi$ for any fixed update order. Under the standard MCMC conditions of Harris recurrence and aperiodicity for the joint chain with kernel $P$, its distribution converges to $\Pi$ in total variation from every initial state.

\subsection{Proof of Proposition~\ref{prop:finite-sweep}}
\label{app:finite-sweep}

\begin{proof}
Since $D_y<\infty$, each exact full conditional for a learned block is absolutely continuous with respect to its approximation for $\Pi$-almost every state. Start both chains from $\Pi$ and retain the states after every block update through $t$ sweeps. Since the initial distributions agree, the KL chain rule expresses the KL between these path distributions as the sum of the KL divergences between corresponding block transitions, averaged under the exact path distribution.

Conventional transitions are identical in both samplers and contribute zero. For learned block $i$, the exact and approximate transitions from state $u$ both leave $u_{-i}$ unchanged, so their KL equals $\kappa_i(u)$ from~\eqref{eq:conditional-error}. Every exact block update preserves $\Pi$, so every intermediate state of the exact chain has distribution $\Pi$. Each learned block is updated once per sweep, so the KL between the complete path distributions is
\begin{equation}
t\sum_{i=1}^N\E_\Pi[\kappa_i(u)]=tD_y.
\end{equation}
The distributions after $t$ sweeps are $\Pi$ and $\Pi\widetilde P^t$. Marginalizing the paths to their final states cannot increase KL, proving~\eqref{eq:finite-sweep-kl}. Pinsker's inequality and the upper bound of one on total variation distance give~\eqref{eq:finite-sweep-tv}.
\end{proof}

The bound accounts for errors in the learned update, including those caused by information discarded by the summary network. For a fixed summary network, the optimal unrestricted density estimator targets the posterior conditional on its output under the training distribution. This equals the full conditional only when the summary retains all information relevant to the block's posterior.

\subsection{Proof of Proposition~\ref{prop:stationary-error}}
\label{app:stationary-error}

\begin{proof}
We construct a family indexed by $0<\varepsilon<1/2$, suppressing this dependence in the notation. Let $\Pi$ be the posterior distribution of $\theta,\vartheta\in\{0,1\}$ with probability mass function
\begin{equation}
\begin{array}{c|cc}
p(\theta,\vartheta)&\vartheta=0&\vartheta=1\\ \hline
\theta=0&1/Z&\varepsilon/Z\\
\theta=1&\varepsilon^3/Z&\varepsilon/Z
\end{array}
\qquad Z=1+2\varepsilon+\varepsilon^3.
\end{equation}
A sweep first updates $\theta$ and then draws $\vartheta$ from its exact full conditional. Write
\begin{equation}
a=p(\theta=1\mid\vartheta=0)
  =\frac{\varepsilon^3}{1+\varepsilon^3},\qquad
b=p(\vartheta=1\mid\theta=0)
  =\frac{\varepsilon}{1+\varepsilon},\qquad
c=p(\vartheta=1\mid\theta=1)
  =\frac{1}{1+\varepsilon^2}.
\end{equation}
The remaining full conditional is $p(\theta=1\mid\vartheta=1)=1/2$. In the approximate sampler, replace the update of $\theta$ by a draw from
\begin{equation}
q(\theta=1\mid\vartheta=0)=a,
\qquad
q(\theta=1\mid\vartheta=1)=1-\varepsilon^2.
\end{equation}
Let $P$ and $\widetilde P$ denote the exact and approximate sweep kernels. Both have strictly positive transition probabilities on the four states, so both chains are irreducible and aperiodic and converge to unique stationary distributions. Write $\widetilde\Pi$ for the approximate sampler's stationary distribution.

Only the conditioning value $\vartheta=1$ contributes to $D_y$. Its posterior probability is $2\varepsilon/Z$, so, as $\varepsilon\to0$,
\begin{equation}
D_y
=-\frac{\varepsilon}{Z}
  \log\!\left[4\varepsilon^2(1-\varepsilon^2)\right]
\longrightarrow0.
\end{equation}

Under the approximate sweep, $\vartheta$ is a two-state Markov chain with transition probabilities from $0$ to $1$ and from $1$ to $0$ given by
\begin{equation}
\begin{aligned}
p_{01}&=(1-a)b+ac=\varepsilon+O(\varepsilon^2),\\
p_{10}&=\varepsilon^2(1-b)+(1-\varepsilon^2)(1-c)
       =2\varepsilon^2+O(\varepsilon^3).
\end{aligned}
\end{equation}
Its stationary probability of $\vartheta=1$ is therefore
\begin{equation}
\widetilde\Pi(\vartheta=1)
=\frac{p_{01}}{p_{01}+p_{10}}\longrightarrow1,
\qquad
\Pi(\vartheta=1)
=\frac{2\varepsilon}{Z}\longrightarrow0.
\end{equation}
The difference on this event bounds total variation distance from below. Since that distance is at most one, $\|\widetilde\Pi-\Pi\|_{\TV}\to1$.

The exact sweep does not depend on the initial $\theta$. Its two transition distributions, indexed by the initial $\vartheta$, differ in total variation by $|a-1/2|\le1/2$: the first draw has success probability $a$ or $1/2$, and the second uses the same full conditional given the updated $\theta$. Thus, for any initial distributions $\nu,\nu'$,
\begin{equation}
\|\nu P-\nu'P\|_{\TV}
\le\frac12\|\nu-\nu'\|_{\TV}.
\end{equation}
Taking $\varepsilon$ sufficiently small gives both inequalities in Proposition~\ref{prop:stationary-error} for any $\delta>0$. Thus the counterexample holds even when the exact sampler mixes rapidly.
\end{proof}

\subsection{Gaussian conditional structure}
\label{app:gaussian-conditional}

We study how errors in the prior mean supplied to a learned update affect its accuracy. 

Consider a learned block with conditional prior
\begin{equation}
\theta_i\mid c_i,x_i\sim\Normal(\mu_i,\Sigma_i),
\label{eq:gaussian-conditional-prior}
\end{equation}
where the current conditioning inputs determine the prior mean $\mu_i$ and covariance $\Sigma_i$. Errors in preceding updates can change these inputs and hence the prior supplied to the network. We compare the learned update supplied with prior mean $\mu_i+\delta\mu_i$ to the exact conditional posterior at $\mu_i$. Here $\delta\mu_i$ represents an error in the supplied prior mean, inherited from preceding updates.

Let $m_{q,i}(\mu_i)$ and $m_{p,i}(\mu_i)$ denote the learned and exact conditional posterior means. The resulting error, as calculated by the difference between the learned and exact means, separates into the network's own approximation error and the effect of the error in its supplied prior mean:
\begin{equation}
\begin{aligned}
&m_{q,i}(\mu_i+\delta\mu_i)-m_{p,i}(\mu_i)\\
&\quad=\big[m_{q,i}(\mu_i+\delta\mu_i)
             -m_{p,i}(\mu_i+\delta\mu_i)\big]\\
&\qquad+\big[m_{p,i}(\mu_i+\delta\mu_i)
             -m_{p,i}(\mu_i)\big].
\end{aligned}
\label{eq:gaussian-conditional-mean-error}
\end{equation}
The first bracket is the network's approximation error at the supplied prior mean. The second is the effect of the inherited error $\delta \mu_i$, the change in the exact posterior mean caused by the changed input.

The inherited error depends on how strongly the exact conditional posterior mean responds to its prior mean. For a Gaussian prior, this sensitivity is
\begin{equation}
\frac{\partial m_{p,i}}{\partial\mu_i^\top}
=\operatorname{Cov}_{p}(\theta_i)\Sigma_i^{-1},
\label{eq:gaussian-prior-response}
\end{equation}
where $\operatorname{Cov}_{p}(\theta_i)$ is the exact conditional posterior covariance \citep{giordano2018covariances}. For a small error in the supplied prior mean, this gives
\begin{equation}
\begin{aligned}
&m_{p,i}(\mu_i+\delta\mu_i)-m_{p,i}(\mu_i)\\
&\quad\approx\operatorname{Cov}_{p}(\theta_i)
       \Sigma_i^{-1}\delta\mu_i,
\end{aligned}
\label{eq:gaussian-inherited-mean-error}
\end{equation}
with the posterior covariance evaluated at the original $\mu_i$. For a scalar block, the inherited mean error is approximately $\delta\mu_i$ multiplied by the posterior-to-prior variance ratio. A ratio below one reduces the error; a ratio above one amplifies it.

\subsection{Stationary error and the Poisson equation}
\label{app:poisson}

The Poisson equation relates conditional approximation error to bias in posterior expectations \citep{rudolf2018perturbation}. For a function $f$, write $\Pi f=\E_\Pi[f(u)]$ and $Pf(u)$ for its expected value after one exact sweep starting from $u$. Suppose $\widetilde P$ has stationary distribution $\widetilde\Pi$ and there is a function $h_f$ satisfying the exact sampler's Poisson equation:
\begin{equation}
h_f-Ph_f=f-\Pi f.
\label{eq:poisson}
\end{equation}
Assume all expectations below are finite. Since $\widetilde\Pi h_f=\widetilde\Pi\widetilde P h_f$, stationarity gives
\begin{align}
\widetilde\Pi f-\Pi f
&=\widetilde\Pi(h_f-Ph_f)\nonumber\\
&=\widetilde\Pi(\widetilde P-P)h_f.
\label{eq:poisson-bias}
\end{align}
To identify the updates contributing to this bias, expand the difference between the sweep kernels as
\begin{equation}
\widetilde P-P
=\sum_{i=1}^{N+M}
\widetilde P_1\cdots\widetilde P_{i-1}
(\widetilde P_i-P_i)
P_{i+1}\cdots P_{N+M},
\label{eq:kernel-telescope}
\end{equation}
where empty products are identity kernels. Conventional transitions satisfy $\widetilde P_i=P_i$ and contribute zero. For learned block $i$, set $g=P_{i+1}\cdots P_{N+M}h_f$. The difference between its approximate and exact updates is
\begin{equation}
(\widetilde P_i-P_i)g(u)
=\int g(u_{-i},v)
\{q_i(v\mid y_i,c_i(u),x_i)-p(v\mid y_i,c_i(u),x_i)\}\,dv,
\end{equation}
where $g(u_{-i},v)$ denotes $g$ evaluated at $u$ with block $i$ replaced by $v$. Combining~\eqref{eq:poisson-bias} and~\eqref{eq:kernel-telescope} expresses stationary bias as a sum of signed contributions from conditional approximation errors. The preceding approximate updates determine the states at which each error is evaluated, while the subsequent exact updates determine its effect on $h_f$.

When the following series converges and $P$ can be applied term by term, the solution can be written as
\begin{equation}
h_f=\sum_{t=0}^\infty P^t(f-\Pi f).
\end{equation}
At state $u$, the term indexed by $t$ is the expected difference between $f$ and its posterior mean after $t$ exact sweeps. Slow decay allows conditional approximation errors to persist, while contributions with opposite signs can cancel.

\section{EXPERIMENTAL DETAILS}
\label{app:experiments}

We evaluated SABI across three applications to real data using high-dimensional models with complex dependencies. All experiments were run on a MacBook Pro with an Apple M4 Pro chip (14 CPU cores, 20 GPU cores) and 24 GB RAM.

\subsection{TIMSS}
\label{app:timss}
\label{app:TIMSS}

\subsubsection{Model and priors}
We use the first mathematics plausible value and home-resources score from \citep{timss2023home}. Excluding duplicate regional extracts and incomplete records leaves 256,416 students in 10,820 schools across 54 countries/systems. Both variables are standardized using their pooled, unweighted means and standard deviations; survey weights are not used.

For student $j$ in school $s$ and country $c$, the model is
\begin{equation}
\begin{aligned}
y_{csj}&\sim\Normal(\alpha_{cs}+\beta_{cs}x_{csj},\sigma^2),\\
\theta_{cs}=\begin{pmatrix}\alpha_{cs}\\\beta_{cs}\end{pmatrix}
&\sim\Normal_2(b_c,\Sigma_S),\qquad
b_c=\begin{pmatrix}\alpha_c\\\beta_c\end{pmatrix}
\sim\Normal_2(\mu,\Sigma_C).
\end{aligned}
\label{eq:timss-model}
\end{equation}
The covariance matrices $\Sigma_S$ and $\Sigma_C$ describe intercept--slope variation at the school and country levels. The independent global priors are
\begin{equation}
\mu\sim\Normal_2(0,I_2),\qquad
\Sigma_S,\Sigma_C\sim\operatorname{IW}_2\!\left(5,\operatorname{diag}(0.5,0.125)\right),\qquad
\sigma^2\sim\operatorname{IG}(3,1.28).
\end{equation}
The inverse-Wishart is parameterized by degrees of freedom and scale; the inverse-gamma by shape and scale.

\subsubsection{Training and inference}
For all three applications, exact training distributions and optimization settings are provided in the accompanying code.

Separate bivariate affine coupling flows estimate school and country coefficients, standardized by their conditional prior mean and Cholesky factor. The school input consists of $\log n$, $\bar x$, $\overline{x^2}$, $\bar y$, $\overline{xy}$, the country coefficients, $\Sigma_S$ and $\log\sigma$. The country input consists of the log number of schools, mean school coefficients, $\mu$, $\Sigma_S$ and $\Sigma_C$. Each covariance is encoded by two log standard deviations and a correlation, giving 11 inputs per network. Both flows have three affine coupling blocks with swap permutations; conditioner widths are $(128,128,128)$ for schools and $(64,64)$ for countries.

The country network trains for 600 epochs.

Both networks minimize posterior NLL. The school network trains for 8,700 epochs, followed by 200 epochs of refinement based on diagnostics of exploratory sweeps (Appendix~\ref{app:refine}).

Inference alternates school, country and conjugate global updates. Figure~\ref{fig:timss-global} and the two-sample C2ST use three chains of 30,000 sweeps, discarding 10,000 and retaining every 50th draw, giving 1,200 draws per method.

\subsection{Citi Bike}
\label{app:citibike}

\subsubsection{Model and priors}
The dataset covers 1 April--26 May 2024 and retains 1,427 stations with at least 50 departures in every week.

Let $y_{ith}$ denote departures at station $i$ in week $t$ and hour $h$. Each station--week has parameters $\theta_{it}=(a_{it},b_{it})^\top$, where $e^{a_{it}}$ is the mean hourly count averaged over the week and $b_{it}$ is log inverse overdispersion. The shared profile distributes the expected weekly total $H e^{a_{it}}$ across $H=168$ hours:
\begin{equation}
w_h=\frac{e^{u_h}}{\sum_{k=1}^{H}e^{u_k}},\qquad
\sum_{h=1}^{H}u_h=0.
\end{equation}
The hourly counts follow
\begin{equation}
\begin{aligned}
m_{ith}&=H e^{a_{it}}w_h,&
v_{it}&=\log(1+e^{-b_{it}}),\\
\ell_{ith}&\sim\Normal(\log m_{ith}-v_{it}/2,v_{it}),&
y_{ith}&\sim\operatorname{Poisson}(e^{\ell_{ith}}).
\end{aligned}
\label{eq:citibike-observation}
\end{equation}
Thus $\operatorname{E}(y_{ith}\mid\theta_{it},w)=m_{ith}$ and $\operatorname{Var}(y_{ith}\mid\theta_{it},w)=m_{ith}+m_{ith}^2e^{-b_{it}}$.

A Gaussian graph prior links a symmetrized four-nearest-neighbor spatial graph and consecutive weeks at each station. For $S$ stations and $T$ weeks, its precision across station--weeks is
\begin{equation}
Q=I_{ST}+\rho_S(I_T\otimes L_S)+\rho_T(L_T\otimes I_S),
\end{equation}
where $L_S$ and $L_T$ are the spatial and temporal graph Laplacians. Stacking station--week vectors within weeks gives mean $\mathbf1_{ST}\otimes\mu$ and covariance $Q^{-1}\otimes\Sigma$. Write $\Sigma=\operatorname{diag}(\tau_a,\tau_b)R\operatorname{diag}(\tau_a,\tau_b)$. We use independent priors
\begin{equation}
\begin{aligned}
\mu_a&\sim\Normal(0,2^2),&
\mu_b&\sim\Normal(\log5,1.5^2),\\
\log\tau_a,\log\tau_b&\sim\Normal(0,0.75^2),&
\log\rho_S,\log\rho_T&\sim\Normal(0,1.5^2),
\end{aligned}
\end{equation}
with $R\sim\operatorname{LKJ}(2)$. The hourly logits have a Gaussian prior on the sum-zero subspace,
\begin{equation}
u\sim\Normal_H\!\left(0,1.5^2[I_H-\mathbf1_H\mathbf1_H^\top/H]\right).
\end{equation}

For a station--week, let $n_{S,it}$ and $n_{T,it}$ count its spatial and temporal neighbors, and $s_{it}$ and $t_{it}$ sum their parameter vectors. Its conditional prior is
\begin{equation}
\begin{aligned}
d_{it}&=1+\rho_S n_{S,it}+\rho_T n_{T,it},\\
\eta_{it}&=\frac{\mu+\rho_S s_{it}+\rho_T t_{it}}{d_{it}},\\
\theta_{it}\mid\theta_{-(it)},\mu,\Sigma,\rho_S,\rho_T
&\sim\Normal_2(\eta_{it},\Sigma/d_{it}).
\end{aligned}
\label{eq:citibike-local-prior}
\end{equation}
The mean $\eta_{it}$ and covariance $\Sigma/d_{it}$ contain all graph information needed by the local network.

\subsubsection{Training and inference}
Both networks minimize conditional NLL on simulations. The \emph{station--week network} receives the 168 pairs $(\log(1+y_h),\log(Hw_h))$ and the conditional-prior means, log standard deviations and correlation. A SetTransformer with two 64-dimensional attention blocks, four heads and eight pooling seeds produces a 32-dimensional summary. Four spline coupling layers, with conditioner widths $(128,128)$, estimate the two prior-whitened parameters.

After 300 training epochs, we run a 100-station SABI pilot with four chains of 240 sweeps, discarding the first 80 and retaining every tenth state. The pilot supplies conditioning values and hourly profiles: 48 states from three chains form the training bank, and 16 states from the fourth chain form the validation bank. For a further 100 epochs, training mixes the initial simulation design equally with perturbed pilot conditioning values and profiles. Station--week parameters are freshly drawn from the corresponding conditional priors, and counts are freshly simulated. Both banks use the same observed dataset.

The \emph{hourly-profile network} estimates $\log(w_h/w_j)$ at fixed pair mass $w_h+w_j$. Gaussian augmentation of the profile prior permits parallel updates of disjoint pairs. Inputs combine the two hours' transformed count residuals and station--week parameters with the augmented prior context and moment summaries. A two-block DeepSet of width 64, with four-head, four-seed attention pooling, produces a 64-dimensional summary. Four scalar spline layers use 16 bins and conditioner widths $(128,128)$.

The station--week network is reused across station counts. A separate hourly-profile network trains for 300 epochs for each station count. Inference alternates both neural updates with conventional updates of the graph parameters.

\subsubsection{Posterior predictives}
\label{app:citi-pred}
The predictive check uses the full 1,427-station fit with four chains of 1,250 retained draws. From each chain we select 256 states without replacement, generate lognormal rates and Poisson counts, and sum every 24 hours into daily totals. The median and pointwise 95\% intervals summarize these 1,024 replications. All eight weeks enter the fit.

Figure~\ref{fig:citibike-predictive} shows stations near the 3rd, 50th, 75th and 90th percentiles of observed activity. Their daily interval coverages are 94.6\%, 87.5\%, 80.4\% and 83.9\%, respectively: the model understates daily variation at the busier selected stations.
\begin{figure}[!htbp]
\centering
\includegraphics[width=\linewidth]{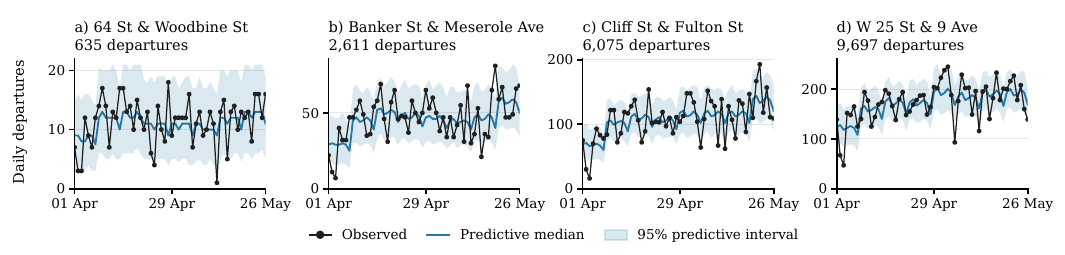}
\caption{Citi Bike daily departures at four stations, ordered by observed activity. Predictive medians and pointwise 95\% intervals include parameter and observation variation. All 56 days are used for fitting.}
\label{fig:citibike-predictive}
\end{figure}

\subsection{DMC and ACDC}
\label{app:dmc}

\subsubsection{Model and priors}
The DMC combines controlled evidence accumulation with a transient automatic response \citep{ulrich2015dmc}. Following \citet{schaefer2026dmc}, we simulate
\begin{equation}
\begin{aligned}
\mathrm dX(t)&=[\mu_c+c\,h'(t)]\mathrm dt+\mathrm dW(t),\\
h(t)&=A(t/\tau)\exp(1-t/\tau),\qquad X(0)=0,
\end{aligned}
\label{eq:dmc-process}
\end{equation}
where $c=+1$ for congruent and $c=-1$ for incongruent trials. The first crossing of $+b$ or $-b$ determines a correct or incorrect response. Response time adds a positive-truncated $\Normal(\mu_r,\sigma_r^2)$ nondecision draw to the crossing time. All six participant parameters are positive (Table~\ref{tab:dmc-priors}). The starting point is fixed at the midpoint, pulse shape at two and diffusion noise at one. Euler simulation uses 1\,ms steps, right-endpoint drift and boundary detection on the same grid.

Let $\theta_i$ collect the six log parameters. In the calibration model, $\theta_{ik}\sim\Normal(\mu_k,\sigma_k^2)$ independently across coordinates. Table~\ref{tab:dmc-priors} gives the population priors. The ACDC model adds publication, dataset and task effects:
\begin{equation}
\begin{aligned}
\theta_{ik}&\sim\Normal(\mu_k+u_{pk}+v_{dpt,k}+w_{tk},\sigma_k^2),\\
u_{pk}&\sim\Normal(0,\omega_{P,k}^2),\qquad
v_{dpt,k}\sim\Normal(0,\omega_{D,k}^2),\qquad
w_{tk}\sim\Normal(0,\omega_{T,k}^2).
\end{aligned}
\label{eq:acdc-hierarchy}
\end{equation}
Each dataset belongs to one publication and task. The priors for $\mu_k$ and $\sigma_k$ remain those in Table~\ref{tab:dmc-priors}, with independent $\log\omega_{P,k},\log\omega_{D,k},\log\omega_{T,k}\sim\Normal(\log0.15,0.5^2)$.

\begin{table}[ht]
\centering
\caption{DMC population priors. Normal distributions are parameterized by mean and variance; physical time parameters are measured in seconds.}
\label{tab:dmc-priors}
\begin{tabular}{llll}
\toprule
Parameter & Meaning & Prior for $\mu_k$ & Prior for $\log\sigma_k$\\
\midrule
$A$ & Automatic amplitude & $\Normal(\log 0.20,0.35^2)$ & $\Normal(\log 0.40,0.25^2)$\\
$\tau$ & Automatic peak time & $\Normal(\log 0.15,0.30^2)$ & $\Normal(\log 0.35,0.25^2)$\\
$\mu_c$ & Controlled drift & $\Normal(\log 3.00,0.35^2)$ & $\Normal(\log 0.35,0.25^2)$\\
$\mu_r$ & Nondecision location & $\Normal(\log 0.35,0.20^2)$ & $\Normal(\log 0.22,0.25^2)$\\
$b$ & Boundary distance & $\Normal(\log 0.60,0.35^2)$ & $\Normal(\log 0.35,0.25^2)$\\
$\sigma_r$ & Nondecision spread & $\Normal(\log 0.04,0.30^2)$ & $\Normal(\log 0.35,0.25^2)$\\
\bottomrule
\end{tabular}
\end{table}

The empirical analysis retains participants with accuracy above 0.75 and median RT below 1.25\,s, restricts RTs to $[0.2,3)$\,s, and selects 60 trials per congruency condition. A narrow spike of incorrect Rey-Mermet trials in $[1.995,1.996)$\,s is removed before selection. The resulting data contain 5,015 dataset-specific participant units and 601,800 trials across 34 datasets, ten publications and three tasks.

\subsubsection{Training and inference}
The \emph{participant network} receives six conditional-prior means and six log standard deviations. It estimates the six prior-standardized log parameters using eight spline coupling layers with conditioner widths $(128,128)$. A 64-dimensional trial summary combines a SetTransformer with response- and congruency-specific RT-distribution features. Each simulation draws participant parameters and 120 trials, equally split by congruency.

\begin{table}[htbp]
\centering
\small
\caption{DMC recovery: 95\% coverage; RMSE in prior-SD units.}
\begin{tabular}{lrrrrrrrrr}
\toprule
& \multicolumn{3}{c}{SABI}
& \multicolumn{3}{c}{Compositional}
& \multicolumn{3}{c}{Direct amortized} \\
\cmidrule(lr){2-4}\cmidrule(lr){5-7}\cmidrule(lr){8-10}
& Coverage & RMSE & ECE
& Coverage & RMSE & ECE
& Coverage & RMSE & ECE \\
\midrule
$\mu_A$               & 91 & 0.209 & 0.065 & 83 & 0.194 & 0.117 & 97 & 0.216 & 0.057 \\
$\mu_\tau$            & 89 & 0.256 & 0.051 & 59 & 0.489 & 0.279 & 96 & 0.519 & 0.015 \\
$\mu_{\mu_c}$         & 95 & 0.052 & 0.025 & 78 & 0.123 & 0.131 & 97 & 0.236 & 0.041 \\
$\mu_{\mu_r}$         & 93 & 0.061 & 0.061 & 77 & 0.138 & 0.086 & 99 & 0.232 & 0.039 \\
$\mu_b$               & 94 & 0.062 & 0.022 & 87 & 0.095 & 0.095 & 97 & 0.148 & 0.017 \\
$\mu_{\sigma_r}$      & 95 & 0.081 & 0.026 & 66 & 0.437 & 0.233 & 94 & 0.911 & 0.019 \\
\midrule
$\sigma_A^2$          & 90 & 0.321 & 0.041 & 40 & 0.908 & 0.399 & 96 & 0.958 & 0.036 \\
$\sigma_\tau^2$       & 77 & 0.815 & 0.203 & 40 & 1.016 & 0.345 & 96 & 0.996 & 0.020 \\
$\sigma_{\mu_c}^2$    & 90 & 0.142 & 0.025 & 55 & 0.644 & 0.285 & 97 & 0.699 & 0.022 \\
$\sigma_{\mu_r}^2$    & 93 & 0.148 & 0.014 & 55 & 0.554 & 0.225 & 94 & 0.596 & 0.019 \\
$\sigma_b^2$          & 91 & 0.185 & 0.030 & 63 & 0.550 & 0.273 & 95 & 0.502 & 0.044 \\
$\sigma_{\sigma_r}^2$ & 93 & 0.231 & 0.043 & 49 & 0.754 & 0.320 & 95 & 0.833 & 0.036 \\
\bottomrule
\label{tab:dmc-full}
\end{tabular}
\end{table}

The participant network minimizes posterior NLL for 600 epochs, followed by 200 epochs of refinement based on sweep checks and simulation-based calibration (Appendix~\ref{app:refine}).

The \emph{compositional estimator} pools five participants' 600 trials per factor and trains for 640 epochs. A SetTransformer with two 64-dimensional attention blocks, four heads and a 16-dimensional output conditions an EDM diffusion network with five 256-unit layers, using F prediction and noise loss. The \emph{direct estimator} pools all 60,000 trials of each 500-participant dataset without participant identifiers and trains for 100 epochs. A SetTransformer with 32 inducing points and a 16-dimensional output conditions an eight-layer spline flow with widths $(128,128)$.

Both baselines use the priors in Table~\ref{tab:dmc-priors} and estimate the 12 analytically prior-standardized population parameters. Training budgets are closely matched: SABI and compositional inference each use 26,214,400 participant simulations, and direct inference uses 26,200,000.

Evaluation uses 100 paired datasets of 500 participants and 120 trials each. SABI runs three chains of 2,000 sweeps, discards 500 and retains every fifth draw, giving 900 draws per dataset. Both baselines retain 801 draws, matching SABI's median bulk ESS. Compositional sampling uses the adaptive two-step diffusion integrator with score mini-batches and denoised-prediction clipping. The empirical ACDC fit reuses the same participant network with three 6,000-sweep chains, discarding 1,000 and retaining every second draw.

Table~\ref{tab:dmc-full} shows 95\% coverage, RMSE and calibration error for all global parameters. RMSE of posterior means is divided by the analytic prior standard deviation of the corresponding parameter. Calibration error is the median absolute difference between empirical and nominal coverage over 20 equally spaced equal-tailed interval levels from 0.005 to 0.995. Both metrics are then averaged over the six coordinates in each family.

\subsubsection{Posterior predictives}
\label{app:dmc-pred}
We select 100 evenly spaced retained states from each ACDC chain. At each state, conditional draws of the fitted participants' parameters generate 120 trials per participant. Figure~\ref{fig:acdc-predictive} summarizes these 300 replications by task. Conditional accuracy uses five equal-count RT bins within each participant and congruency condition. Delta plots subtract congruent from incongruent correct-RT quantiles at probabilities $0.1,0.3,0.5,0.7,0.9$. Both summaries weight participants equally; shaded regions are pointwise 95\% predictive intervals.

Predictions reproduce the increasing Stroop delta curve, but underestimate the slowest-quantile RT difference in all three tasks. Predicted incongruent Stroop accuracy rises with RT, whereas observed accuracy is nearly flat or decreasing.
\begin{figure}[!htbp]
\centering
\includegraphics[width=0.9\linewidth]{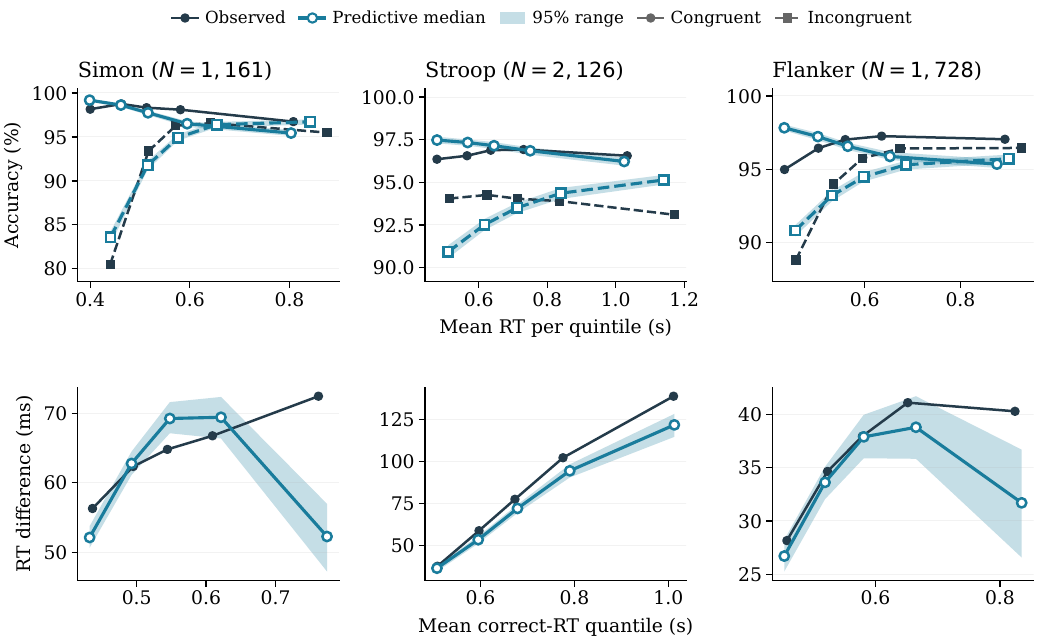}
\caption{ACDC posterior predictives by task. Top: accuracy across five within-participant RT bins, separately for congruent and incongruent trials. Bottom: incongruent-minus-congruent correct-RT quantiles against their average RT. Observed summaries are compared with predictive medians and pointwise 95\% intervals from 300 replications.}
\label{fig:acdc-predictive}
\end{figure}

\section{LEARNING LOCAL MARGINAL LIKELIHOODS}
\label{app:local-marginal-likelihoods}

For local blocks that are conditionally independent given shared parameters $\vartheta$, an alternative to SABI's local posterior updates is to learn the marginal likelihood factors
\begin{equation}
m_i(y_i\mid c_i,x_i)
=\int p(y_i\mid\theta_i,c_i,x_i)\,
p(\theta_i\mid c_i,x_i)\,d\theta_i,
\end{equation}
where $c_i=c_i(\vartheta)$ contains the conditioning parameters for block $i$.

Training uses the same local simulations as SABI: draw $\theta_i$ from its conditional prior and simulate $y_i$. Instead of learning the conditional posterior of $\theta_i$, train a conditional density estimator $\widehat m_i$ for $y_i$ given $c_i,x_i$. MCMC then targets
\begin{equation}
\widehat p(\vartheta\mid y,x)
\propto p(\vartheta\mid x)
\prod_i \widehat m_i(y_i\mid c_i(\vartheta),x_i),
\end{equation}
evaluating the learned factors at each proposed state without drawing the local parameters. This reduces the sampled parameter space, but requires density estimation over the local data and introduces approximation error through the learned likelihood factors.

\end{document}